\documentclass[11pt,a4paper]{article}
\usepackage[margin=25mm]{geometry}
\usepackage{fontspec}
\usepackage{amsmath,amssymb,amsthm,mathtools}
\usepackage{unicode-math}
\usepackage{booktabs,array,tabularx,longtable,graphicx,tikz,placeins,hyperref,fancyhdr}
\hypersetup{colorlinks=true,linkcolor=blue!45!black,urlcolor=blue!45!black,citecolor=blue!45!black,pdftitle={Generalized DCCQ: From Binary Quotients to Multinomial Simplex Geometry and Critical-Strip Coordinates},pdfauthor={Y. Kenan Yılmaz}}
\newtheorem{theorem}{Theorem}[section]
\newtheorem{proposition}[theorem]{Proposition}
\newtheorem{corollary}[theorem]{Corollary}
\theoremstyle{definition}\newtheorem{definition}[theorem]{Definition}
\newcommand{\D}{\Delta}\newcommand{\R}{\mathbb R}\newcommand{\Q}{\mathbb Q}\newcommand{\N}{\mathbb N}\newcommand{\C}{\mathbb C}\newcommand{\Strip}{\mathcal S}
\title{Generalized DCCQ: From Binary Quotients to Multinomial Simplex Geometry and Critical-Strip Coordinates}
\author{Y. Kenan Yılmaz}
\date{15 September 2026}
\begin{document}\maketitle
\begin{abstract}
We extend the discrete complex complement quotient (DCCQ) framework from binary Bernoulli counts to multinomial count compositions. For $m+1$ categories, $m$ is the number of independent probability degrees of freedom. Integer count vectors modulo common scaling determine rational points of the $m$-dimensional probability simplex. Building on standard simplex and log-ratio coordinate geometry, for $m\ge2$ we define the full multinomial DCCQ coordinate map
\[
\Phi_m(\mathbf p)
\coloneqq
\left(
 p_0+i\log\frac{p_1}{p_m},
 \log\frac{p_2}{p_m},
 \ldots,
 \log\frac{p_{m-1}}{p_m}
\right),
\]
and show that it is a real-analytic diffeomorphism
\[
\D_m^\circ\cong\Strip\times\R^{m-2},
\qquad
\Strip\coloneqq\{s\in\C:0<\Re s<1\}.
\]
The previously established binary baseline $m=1$ gives a critical-line coordinate, while the ternary case $m=2$ gives the full open critical strip; higher multinomial models retain $m-2$ additional real contrasts. We also give a one-versus-rest specialization, an exact integer-lattice realization of the ternary coordinate, and a hyperbolic representation of its log-ratio. No zero-location theorem or proof of the Riemann Hypothesis is claimed.
\end{abstract}
\noindent\textbf{Keywords:} DCCQ; multinomial counts; lattice compositions; probability simplex; log-ratio coordinates; critical strip; reflection symmetry; barycentric coordinates; integer lattice; hyperbolic representation.

\section*{Notation at a glance}
Bold symbols denote vectors; their components are scalar. The symbol $\coloneqq$ introduces a definition, and all logarithms are natural. Only the main recurring symbols are listed here; Appendix~\ref{app:notation} gives the full notation reference.
\begin{center}
\small
\renewcommand{\arraystretch}{1.18}
\begin{tabularx}{\linewidth}{@{}p{.24\linewidth}X@{}}
\toprule
Symbol & Meaning \\
\midrule
$m$ & Probability dimension; the alphabet has $m+1$ categories. \\
$\mathbf p,\ \mathbf k$ & Probability and integer count vectors, each with $m+1$ components. \\
$n,\ [\mathbf k]$ & Total count $n\coloneqq\sum_jk_j$; the class of counts with the same normalized composition. \\
$\D_m,\ \D_m^\circ$ & The $m$-dimensional probability simplex and its relative interior. \\
$\Strip,\ s=\sigma+it$ & The open critical strip $0<\sigma<1$ and its complex coordinate. \\
$\Phi_m$ & Full multinomial DCCQ coordinate map, for $m\ge2$. \\
$R_m$ & One-versus-rest DCCQ map to the line $\Re s=1/(m+1)$. \\
\bottomrule
\end{tabularx}
\end{center}

\section{Introduction}\label{sec:intro}
The ordered Bernoulli-word kernel
\begin{equation}B(p;n,k)\coloneqq p^k(1-p)^{n-k},\qquad 0<p<1,\quad n\in\N,\quad 0\le k\le n,\quad k\in\mathbb Z,\tag{1}\end{equation}
contains a quotient geometry in which a count pair is reduced to its normalized proportion and complement exchanges the two count coordinates. Earlier work used ordered Bernoulli levels and complement-preserving complex continuation to obtain the vertical geometry $\Re s=1/2$ \cite{yilmaz2026dccq}. The present paper asks what changes when the alphabet itself is enlarged from the binary alphabet $\{0,1\}$ to an alphabet $\{0,1,\ldots,m\}$ with $m+1$ categories. We use this indexing only for notational convenience: after the normalization constraint, the corresponding probability simplex has exactly $m$ independent degrees of freedom.

There are two logically different directions of generalization. One may retain the binary alphabet and enlarge exponents, levels, or complex parameters; or one may enlarge the alphabet and replace the Bernoulli pair by an $(m+1)$-category composition. The first retains a one-dimensional probability geometry. The second replaces the Bernoulli interval by the $m$-dimensional probability simplex, denoted $\D_m$, and is the subject here.

The count-to-simplex passage is established. In compositional-data analysis, closure divides a positive vector by its total; vectors on the same positive ray map to the same simplex point, and log-ratio transformations provide unconstrained real coordinates \cite{aitchison,egozcue}. Lovell, Chua, and McGrath explicitly describe natural-number lattice compositions, with simplex points representing equivalence classes of vectors on rays from the origin \cite{lovell}. We take this established count-ray/simplex geometry as the starting point for the DCCQ extension developed below.

The question addressed here is therefore not whether a multinomial simplex admits coordinates---this is standard---but how the binary DCCQ construction extends while retaining all multinomial degrees of freedom, and what complex-strip geometry results from that extension.

The proposed contribution is the subsequent DCCQ complex coordinate construction and symmetry interpretation. Four structures are central:
\begin{enumerate}
\item a full $m$-degree-of-freedom coordinate map $\Phi_m:\D_m^\circ\to\Strip\times\R^{m-2}$ for $m\ge2$, where $\Strip\coloneqq\{s\in\C:0<\Re s<1\}$ is the open critical strip;
\item a barycentrically anchored one-dimensional one-versus-rest Bernoulli submodel, mapped by $R_m$ to the vertical line $\Re s=\frac{1}{m+1}$;
\item the ternary specialization ($m=2$), whose full open simplex is real-analytically diffeomorphic to the open critical strip;
\item an integer-lattice realization of this coordinate chart on rational simplex points.
\end{enumerate}
The earlier binary work~\cite{yilmaz2026dccq} supplies the critical-line baseline. In the present coordinate convention, this geometry is parameterized by the elementary logit map $R_1$; the multinomial development begins at $m=2$. The dimension ladder is thus
\[
\begin{array}{ccl}
m=1&:&\text{binary interval}\longrightarrow\text{one critical line},\\
m=2&:&\text{ternary simplex}\longleftrightarrow\text{full critical strip},\\
m>2&:&\text{multinomial simplex}\longleftrightarrow\Strip\times\R^{m-2}.
\end{array}
\]
The first row recalls that binary geometry, using $R_1$ as its parameterization here. The multinomial development of the present paper starts at $m=2$, where the ternary model has exactly two real degrees of freedom and therefore admits a full complex-strip coordinate under $\Phi_2$. For $m>2$, the full coordinate map $\Phi_m$ retains the additional $m-2$ real contrasts. The ternary case is distinguished dimensionally, not arithmetically. The recurring value $\frac{1}{3}$ is a barycentric reference for the first non-binary alphabet, not an intrinsically privileged strip coordinate. Other vertical lines arising from the DCCQ construction are DCCQ characteristic lines; the term \emph{critical line} is reserved for $\Re s=1/2$. Membership in a line does not identify a zero of $\zeta$ or $\xi$.

For a first reading, Section~2 gives the direct multinomial DCCQ construction without requiring simplex theory. Section~3 then formalizes the count-ray/simplex identification and proves the full coordinate-chart theorem. The later sections develop the one-versus-rest family, the ternary specialization, lattice realization, the hyperbolic representation, and symmetry consequences.

\section{Multinomial DCCQ construction}\label{sec:construction}
\subsection{Ordered multinomial kernel}\label{sec:kernel}
Let $m\ge1$. We begin with an alphabet of $m+1$ categories. Its probability and count vectors are written
\[
\mathbf p\coloneqq(p_0,p_1,\ldots,p_m),\qquad
\mathbf k\coloneqq(k_0,k_1,\ldots,k_m),
\]
with $p_j\ge0$, $\sum_{j=0}^{m}p_j=1$, $k_j\in\N_0$, and positive total $n\coloneqq\sum_{j=0}^{m}k_j$. The normalization constraint leaves exactly $m$ independent probability degrees of freedom. The ordered multinomial word kernel is
\begin{equation}M(\mathbf p;\mathbf k)\coloneqq\prod_{j=0}^{m} p_j^{k_j}.\tag{2}\label{eq:kernel}\end{equation}
A zero-exponent factor is defined to be $1$, including when its probability is zero. We omit the multinomial multiplicity coefficient because the object is the probability of an individual ordered word, not the total probability of all words with a given count vector. The count-ray and simplex interpretation is postponed to Section~\ref{sec:simplex}.

\paragraph{Separate the distinguished mass from the remaining composition.}
Assume now that all $p_j>0$. Category $0$ has mass $p_0$; the remaining categories have total mass $1-p_0$. Their conditional probabilities are
\[
q_j\coloneqq\frac{p_j}{1-p_0},\qquad 1\le j\le m,
\qquad \sum_{j=1}^{m}q_j=1.
\]
Substituting $p_j=(1-p_0)q_j$ into \eqref{eq:kernel} gives the exact factorization
\[
M(\mathbf p;\mathbf k)
=
\underbrace{p_0^{k_0}(1-p_0)^{n-k_0}}_{\text{distinguished category versus the rest}}
\underbrace{\prod_{j=1}^{m}q_j^{k_j}}_{\text{composition within the rest}}.
\]
This elementary conditional decomposition does not restrict the model: the $q_j$ remain free subject to their sum constraint. In particular, they have not been made equal. The binary complement survives as the split $p_0$ versus $1-p_0$; in the multinomial setting, the complementary mass $1-p_0$ is further resolved by the conditional composition $(q_1,\ldots,q_m)$. The kernel thus separates one mass parameter $p_0$ from $m-1$ conditional-composition parameters. The fixed uniform split used by the one-versus-rest submodel will be imposed only in Section~\ref{sec:ovr}.

\subsection{Full multinomial DCCQ coordinate}\label{sec:coordinate}
For $m\ge2$, choose the final conditional category as reference and define
\begin{equation}
u_j\coloneqq\log\frac{q_j}{q_m}=\log\frac{p_j}{p_m},\qquad 1\le j\le m-1.\tag{3}\label{eq:contrasts}
\end{equation}
The common mass $1-p_0$ cancels inside these ratios, so the contrasts describe the composition within the rest, independently of its total mass. The DCCQ coordinate construction retains that mass through $p_0$, uses one oriented contrast as the imaginary part, and keeps every remaining contrast as a real coordinate.
\begin{definition}[Full multinomial DCCQ coordinate map]
For $m\ge2$, define
\begin{equation}
\Phi_m(\mathbf p)\coloneqq\left(p_0+i\log\frac{p_1}{p_m},\log\frac{p_2}{p_m},\ldots,\log\frac{p_{m-1}}{p_m}\right)\in\Strip\times\R^{m-2},\tag{4}\label{eq:fullmap}
\end{equation}
where $\Strip$ is the open critical strip and there are no additional real coordinates when $m=2$.
\end{definition}
The normalized $(m+1)$-category model has $m$ real degrees of freedom. One mass and one contrast form the complex coordinate, while the remaining $m-2$ contrasts stay explicit. Thus $m=2$ is the first case in which no residual real coordinate remains. The one-dimensional binary case $m=1$ is represented separately by $R_1$, which parameterizes the binary baseline.

The kernel can also be written in precisely these real coordinates. Define
\[
Z\coloneqq1+\sum_{j=1}^{m-1}e^{u_j}.
\]
Since $q_j=e^{u_j}/Z$ for $j<m$ and $q_m=1/Z$, the factorization above becomes
\[
M(\mathbf p;\mathbf k)
=
p_0^{k_0}(1-p_0)^{n-k_0}
\frac{\exp\!\left(\sum_{j=1}^{m-1}k_j u_j\right)}{Z^{\,n-k_0}}.
\]
Thus the kernel supplies a statistical interpretation of the coordinates: a distinguished mass and conditional log-ratios. Combining $p_0$ with $u_1$ into a complex number is an additional coordinate choice, not a uniquely forced consequence of the kernel. The inverse-map proof in Section~\ref{sec:chart} establishes that this choice loses no probability information. No simplex theory is needed to follow the construction so far.

\section{Count rays and simplex geometry}\label{sec:simplex}
This section formalizes the geometric meaning of the construction above. The count-ray quotient is first identified with the probability simplex; the DCCQ coordinate $\Phi_m$ is then recognized as a full coordinate chart on its interior.

\subsection{Count-ray quotient and simplex reconstruction}
The normalized probability domain used in Section~2 is the $m$-dimensional probability simplex
\[
\D_m\coloneqq\{\mathbf p\in\R^{m+1}:p_j\ge0,\ \sum_{j=0}^{m}p_j=1\},
\]
with relative interior $\D_m^\circ$ obtained by requiring all $p_j>0$. Here relative interior is taken in the affine hyperplane $\sum_{j=0}^m p_j=1$, not in the ambient space $\R^{m+1}$.

Use nonzero count vectors as the discrete objects:
\[
\mathcal{C}_m\coloneqq\N_0^{m+1}\setminus\{\mathbf0\}.
\]
The total $n=\sum_jk_j$ is determined by $\mathbf k$ and is not an independent datum. Define the count-ray equivalence relation by
\begin{equation}
\mathbf k\sim\mathbf k'
\quad\Longleftrightarrow\quad
\frac{\mathbf k}{\sum_{j=0}^m k_j}
=
\frac{\mathbf k'}{\sum_{j=0}^m k'_j}.
\tag{5}\label{eq:count-equivalence}
\end{equation}
\begin{theorem}[Count-ray simplex reconstruction]\label{thm:count}
The quotient $\mathcal{C}_m/\!\sim$ is naturally identified with $\D_m\cap\Q^{m+1}$. Under this identification, its Euclidean closure is the full closed simplex $\D_m$. For strictly positive counts, the quotient is $\D_m^\circ\cap\Q^{m+1}$, dense in the open simplex.
\end{theorem}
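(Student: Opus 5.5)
The plan is to define the closure map $\pi:\mathcal C_m\to\D_m$ by $\pi(\mathbf k)\coloneqq\mathbf k/n$ with $n=\sum_jk_j>0$, and check three things: $\pi$ lands in $\D_m\cap\Q^{m+1}$, its fibres are exactly the $\sim$-classes, and it is surjective onto the rational points. The image is clear: each component $k_j/n$ is a nonnegative rational, and the components sum to $1$. By definition \eqref{eq:count-equivalence}, $\mathbf k\sim\mathbf k'$ holds exactly when $\pi(\mathbf k)=\pi(\mathbf k')$. So $\pi$ descends to an injective map $\bar\pi:\mathcal C_m/\!\sim\,\to\D_m\cap\Q^{m+1}$; this is the "natural" identification, and no choices are involved.

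For surjectivity, take $\mathbf p\in\D_m\cap\Q^{m+1}$. Write $p_j=a_j/b_j$ in lowest terms, with $a_j\in\N_0$ and $b_j\ge1$, and let $N$ be a common denominator, for instance $\operatorname{lcm}(b_0,\dots,b_m)$. Set $k_j\coloneqq Np_j\in\N_0$. Then $\sum_jk_j=N>0$, so $\mathbf k\in\mathcal C_m$ and $\pi(\mathbf k)=\mathbf p$. We may also note that each class has a unique primitive representative, obtained by dividing by $\gcd(k_0,\dots,k_m)$, and that the class is $\{c\,\mathbf k_{\rm prim}:c\in\N\}$. This is the lattice points on the ray, which matches the count-ray picture. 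Restricting to strictly positive counts, $\pi(\mathbf k)$ has all components positive exactly when all $k_j>0$. The same construction therefore gives a bijection with $\D_m^\circ\cap\Q^{m+1}$.

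The remaining step is density, which I expect to be the only genuinely analytic point, though it is mild. The subtlety is that we need rational points lying \emph{in} the simplex, on the hyperplane and with the sign constraints, not merely rational points of $\R^{m+1}$ near $\mathbf p$. I would handle this by explicit rounding. Given $\mathbf p\in\D_m$ and $N\in\N$, set $k_j\coloneqq\lfloor Np_j\rfloor$ for $j\ge1$ and $k_0\coloneqq N-\sum_{j\ge1}k_j$. Then $k_0\ge\lfloor Np_0\rfloor\ge0$, the total is $N$, and $|k_j/N-p_j|\le m/N$ for every $j$. Hence $\pi(\mathbf k)\to\mathbf p$ as $N\to\infty$, so the closure of $\D_m\cap\Q^{m+1}$ contains $\D_m$. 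The reverse inclusion holds because $\D_m$ is closed.

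For density in the open simplex, let $\mathbf p\in\D_m^\circ$ and fix $\varepsilon<\min_jp_j$. For large $N$ the rounded points are within $\varepsilon$ of $\mathbf p$ in each coordinate, so they have all components positive and lie in $\D_m^\circ\cap\Q^{m+1}$. This proves density there as well, since the topology is the relative one on the affine hyperplane, where these approximants already lie.
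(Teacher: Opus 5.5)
Your proposal is correct and follows the paper's route: the fibres of $\mathbf k\mapsto\mathbf k/n$ are the $\sim$-classes by definition, clearing a common denominator gives surjectivity onto $\D_m\cap\Q^{m+1}$, and positive counts correspond to positive components. Where the paper only cites ``rational density'' for the closure and interior statements, your rounding $k_j=\lfloor Np_j\rfloor$ for $j\ge1$, $k_0=N-\sum_{j\ge1}k_j$ spells out why the approximants lie on the hyperplane and inside the simplex; this fills in detail rather than changing the argument.
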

\begin{proof}
Every $\mathbf k/n$ has nonnegative rational coordinates summing to one. Conversely, choose a common denominator $n$ for a rational simplex point $\mathbf r$ and set $k_j\coloneqq nr_j$. Equality of normalized vectors is precisely the equivalence relation. Rational density gives the closure and interior-density statements.
\end{proof}
In the ternary case, the ray represented by $\mathbf k=(2,3,1)$ also contains $2\mathbf k=(4,6,2)$; both normalize to $(1/3,1/2,1/6)$. This same ray will reappear below as a local check of the lattice and hyperbolic representations.

This is the $(m+1)$-category form of the binary quotient and agrees with lattice-composition closure geometry \cite{lovell}. The DCCQ-specific step is the complex coordinate construction already introduced in Section~2.

\paragraph{The information removed by normalization.}
Every nonzero integer count vector has the unique decomposition
\[
\mathbf k=g\,\mathbf k_{\mathrm{prim}},\qquad g\coloneqq\gcd(k_0,\ldots,k_m),\qquad\gcd(\mathbf k_{\mathrm{prim}})=1.
\]
The normalized rational point determines $\mathbf k_{\mathrm{prim}}$ uniquely; $g$ restores the original count state and its total. Thus normalization retains composition and forgets common integer scale. The present coordinate construction works at the ray level. Scale-dependent counting may be added as a separate structure without changing this quotient.

\paragraph{Scale in the kernel and in the quotient.}
For positive $\mathbf p$ and a positive integer $c$,
\[
M(\mathbf p;c\mathbf k)=M(\mathbf p;\mathbf k)^c,
\qquad
\frac{1}{n}\log M(\mathbf p;\mathbf k)
=\sum_{j=0}^m\frac{k_j}{n}\log p_j.
\]
The normalized log-kernel depends on the empirical composition $\mathbf k/n$, whereas the raw kernel is not invariant under count scaling. A probability parameter $\mathbf p$ in $M$ need not equal the empirical composition; their identification below is used only when evaluating the coordinate map on a count class.

\subsection{The DCCQ coordinate as a simplex chart}\label{sec:chart}
Classical additive log-ratio coordinates already identify the open probability simplex with Euclidean space \cite{aitchison1982,aitchison,egozcue}. The map $\Phi_m$ is an explicit real-analytic reparameterization of this standard coordinate geometry: it retains one bounded mass coordinate, combines one oriented log-ratio with that mass coordinate as a complex variable, and leaves the remaining log-ratios as real coordinates.

More specifically, before the complex coordinate construction, the real tuple $(p_0,u_1,\ldots,u_{m-1})$ is, after restriction to the probability simplex, an instance of the standard mixed-coordinate construction of information geometry \cite{sei}. In the representation
\[
\psi(\mathbf p)=B\log\mathbf p,\qquad
\lambda(\mathbf p)=A\mathbf p,\qquad
AB^{\mathsf T}=0,
\]
take $A$ with rows $\mathbf 1^{\mathsf T}$ and $\mathbf e_0^{\mathsf T}$, and take the rows of $B$ to be $(\mathbf e_j-\mathbf e_m)^{\mathsf T}$ for $1\le j\le m-1$. On the simplex this gives $\lambda=(1,p_0)$ and $\psi=(u_1,\ldots,u_{m-1})$. Thus neither this real coordinate system nor the underlying simplex-to-Euclidean equivalence is claimed as new. The DCCQ-specific step considered here is the subsequent complex coordinate construction $p_0+i u_1$, together with its dimensional and reflection/symmetry interpretation.

With the simplex identification now explicit, $\Phi_m$ becomes a global real-analytic chart.

\begin{theorem}[Full $m$-degree-of-freedom coordinate theorem]\label{thm:full}
For every $m\ge2$, $\Phi_m$ is a real-analytic diffeomorphism
\begin{equation}\D_m^\circ\cong\Strip\times\R^{m-2}.\tag{6}\label{eq:diffeomorphism}\end{equation}
Equivalently, before the complex coordinate construction, $\D_m^\circ\cong(0,1)\times\R^{m-1}$.
\end{theorem}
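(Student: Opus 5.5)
We will prove Theorem~\ref{thm:full} by writing down an explicit inverse, checking that both directions are real-analytic, and identifying $\Strip\times\R^{m-2}$ with $(0,1)\times\R^{m-1}$ via $s=\sigma+it\mapsto(\sigma,t)$. That identification is a real-linear isomorphism $\C\cong\R^2$ restricted to an open set. So it suffices to show that the real map
\[
\phi(\mathbf p)\coloneqq(p_0,u_1,\ldots,u_{m-1}),\qquad u_j=\log\frac{p_j}{p_m},
\]
is a real-analytic diffeomorphism $\D_m^\circ\to(0,1)\times\R^{m-1}$. This also gives the ``equivalently'' clause. We regard $\D_m^\circ$ as an open subset of the affine hyperplane $\sum_j p_j=1$, which is an analytic $m$-manifold. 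We can use $(p_0,\ldots,p_{m-1})$ as global affine coordinates on it, with $p_m=1-\sum_{j<m}p_j$.

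\textbf{Well-definedness and analyticity of $\phi$.} For $\mathbf p\in\D_m^\circ$ all $p_j>0$ and $\sum_j p_j=1$, so $0<p_0<1$; here $m\ge2$ guarantees that $p_1,p_m$ are distinct from $p_0$, so $p_0<1$ is strict. The ratios $p_j/p_m$ are positive, so each $u_j$ is a finite real number. Thus $\phi$ lands in $(0,1)\times\R^{m-1}$. Each component is a composition of affine functions, quotients of positive analytic functions, and $\log$ on $(0,\infty)$, so $\phi$ is real-analytic.

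\textbf{Explicit inverse.} Following the factorization in Section~\ref{sec:coordinate}, given $(\sigma,u_1,\ldots,u_{m-1})\in(0,1)\times\R^{m-1}$ we set $Z\coloneqq1+\sum_{j=1}^{m-1}e^{u_j}$ and define
\[
p_0\coloneqq\sigma,\qquad p_j\coloneqq(1-\sigma)\frac{e^{u_j}}{Z}\ (1\le j\le m-1),\qquad p_m\coloneqq\frac{1-\sigma}{Z}.
\]
All components are positive and they sum to $\sigma+(1-\sigma)Z/Z=1$, so this point lies in $\D_m^\circ$. The map is analytic because $Z>0$ and is built from $\exp$ and rational operations.

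We then verify the two compositions directly. First, $\phi\circ\phi^{-1}=\mathrm{id}$, since $p_j/p_m=e^{u_j}$. Second, $\phi^{-1}\circ\phi=\mathrm{id}$: starting from $\mathbf p$, we have $Z=\sum_{j=1}^m p_j/p_m=(1-p_0)/p_m$, hence $(1-p_0)/Z=p_m$ and $(1-p_0)e^{u_j}/Z=p_j$. A bijection with an analytic inverse is a real-analytic diffeomorphism, so no Jacobian computation is needed. Composing with $(\sigma,t)\mapsto\sigma+it$ in the first two slots then gives exactly $\Phi_m$, proving \eqref{eq:diffeomorphism}.

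The only delicate point is bookkeeping rather than analysis. We must be explicit that ``diffeomorphism'' refers to the manifold structure of the relative interior inside the hyperplane, via the affine chart $(p_0,\ldots,p_{m-1})$. We must also check that the inverse formula recovers $p_m$ consistently, which is the identity $Z=(1-p_0)/p_m$. For $m=2$ the product $\R^{m-2}$ is a point and the same argument applies verbatim with $Z=1+e^{u_1}$.
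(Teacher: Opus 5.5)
Your proposal is correct and follows essentially the same route as the paper: you write down the explicit inverse $p_0=\sigma$, $p_j=(1-\sigma)e^{u_j}/Z$, $p_m=(1-\sigma)/Z$, check both compositions, and note real-analyticity in both directions. Your additions, namely the identity $Z=(1-p_0)/p_m$ for the round trip $\phi^{-1}\circ\phi$ and the explicit affine chart on the hyperplane, spell out steps that the paper states in a single line.
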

\begin{proof}
Write target coordinates as $(s,u_2,\ldots,u_{m-1})$, with $s=\sigma+iu_1$ and $0<\sigma<1$. Using $Z=1+\sum_{j=1}^{m-1}e^{u_j}$ from Section~\ref{sec:coordinate}, the inverse is
\begin{equation}
p_0=\sigma,\quad p_j=(1-\sigma)\frac{e^{u_j}}{Z}\ (1\le j\le m-1),\quad p_m=(1-\sigma)\frac{1}{Z}.\tag{7}\label{eq:inverse}
\end{equation}
The coordinates are positive, sum to one, and recover all the prescribed log-ratios. Conversely, substituting the coordinates of any $\mathbf p\in\D_m^\circ$ into this formula recovers $\mathbf p$. Both directions are real analytic; $\Strip\times\R^{m-2}$ is regarded as an open real $m$-dimensional domain.
\end{proof}
Through Theorem~\ref{thm:count}, the notation on count classes means
\[
\Phi_m([\mathbf k])\coloneqq\Phi_m\!\left(\frac{\mathbf k}{n}\right),
\qquad n=\sum_{j=0}^m k_j,
\]
for strictly positive counts. Substitution gives
\begin{equation}
\Phi_m([\mathbf k])=\left(\frac{k_0}n+i\log\frac{k_1}{k_m},\log\frac{k_2}{k_m},\ldots,\log\frac{k_{m-1}}{k_m}\right).\tag{8}
\end{equation}
Common scaling changes neither normalized mass nor ratios. The chart therefore belongs to the count ray, not its chosen representative.
\begin{corollary}[Ternary strip specialization]\label{cor:ternary}
For $m=2$, equivalently three categories, there are no additional real coordinates and $\Phi_2(\mathbf p)=p_0+i\log(p_1/p_2)$ is a real-analytic diffeomorphism $\D_2^\circ\to\Strip$. For $m>2$, the full geometry is retained as a strip coordinate and $m-2$ additional real contrasts.
\end{corollary}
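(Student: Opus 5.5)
The corollary is the case $m=2$ of Theorem~\ref{thm:full}, so I will read it off from that theorem and check the degenerate bookkeeping directly.

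\emph{Step 1: specialize the map.} For $m=2$ the contrast list $u_1,\ldots,u_{m-1}$ in \eqref{eq:contrasts} has the single entry $u_1=\log(p_1/p_2)$. The trailing real coordinates $\log(p_j/p_m)$ for $2\le j\le m-1$ form an empty list. Hence \eqref{eq:fullmap} reduces to $\Phi_2(\mathbf p)=p_0+i\log(p_1/p_2)$, and the target $\Strip\times\R^{0}$ is $\Strip$.

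\emph{Step 2: verify the inverse explicitly.} Although Theorem~\ref{thm:full} already covers this case, I will check \eqref{eq:inverse} by hand so the statement is self-contained. Here $Z=1+e^{u_1}$, and for $s=\sigma+it\in\Strip$ the inverse is
\[
p_0=\sigma,\qquad p_1=(1-\sigma)\frac{e^{t}}{1+e^{t}},\qquad p_2=(1-\sigma)\frac{1}{1+e^{t}}.
\]
These are positive because $0<\sigma<1$, and they sum to $1$. They also satisfy $p_0=\sigma$ and $\log(p_1/p_2)=t$, so this map composed with $\Phi_2$ is the identity on $\Strip$. Conversely, for $\mathbf p\in\D_2^\circ$ the relations $p_1+p_2=1-p_0$ and $p_1/p_2=e^{t}$ force the displayed formulas, so composing in the other order is the identity on $\D_2^\circ$. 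Both maps are built from $\log$, $\exp$ and rational operations with nonvanishing denominators, so both are real analytic. Here $\Strip$ is viewed as an open subset of $\R^2$ and $\D_2^\circ$ as a $2$-dimensional open subset of its affine plane.

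\emph{Step 3: the case $m>2$.} The second sentence of the corollary simply restates Theorem~\ref{thm:full}: the first component of $\Phi_m$ lies in $\Strip$, and the remaining $m-2$ components are the contrasts $u_2,\ldots,u_{m-1}$. The only point needing care in the whole argument is the convention that the empty product $\R^{0}$ is a point. Once that is fixed, the corollary is immediate.
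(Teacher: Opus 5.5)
Your proposal is correct and follows the paper's route: the paper reads the corollary off as the $m=2$ case of Theorem~\ref{thm:full}, and your Step~2 check is the $m=2$ instance of \eqref{eq:inverse} that the paper writes out in Section~\ref{sec:ternary}, with the same positivity, sum-to-one, and $p_1/p_2=e^t$ verifications. The only step you leave implicit is that $\Phi_2$ actually lands in $\Strip$; this holds because $p_1,p_2>0$ force $0<p_0<1$.
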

A complex coordinate consumes exactly two real degrees of freedom: one mass and one oriented contrast. This is why the full ternary simplex gives a strip, whereas the binary interval gives a line under $R_1$.

The construction depends on the distinguished coordinate, reference component, and first contrast. Permuting categories produces equivalent choices of chart. A preferred permutation-equivariant coordinate construction, if needed by an application, is a separate question; no canonical choice is asserted.

\section{One-versus-rest DCCQ map and characteristic lines}\label{sec:ovr}
The barycenter of $\D_m$ is the vector $\mathbf b_m\coloneqq\left(\frac{1}{m+1},\ldots,\frac{1}{m+1}\right)$. Choose one category and constrain the remaining $m$ probabilities to be equal:
\begin{equation}\mathbf P_m(p)\coloneqq\left(p,\frac{1-p}{m},\ldots,\frac{1-p}{m}\right),\qquad0<p<1.\tag{9}\end{equation}
This family has one real degree of freedom: it is a proper submodel for $m>1$ and the full binary model for $m=1$. More precisely, it is a \emph{one-versus-rest Bernoulli submodel with a fixed conditional split}: the indicator of the distinguished category has law $\operatorname{Bernoulli}(p)$, while conditional on the complementary event the remaining $m$ categories are uniformly distributed. The contrast is a shifted Bernoulli logit:
\begin{equation}t_m(p)\coloneqq\log\frac{mp}{1-p}=\log\frac p{1-p}+\log m.\tag{10}\end{equation}
At the barycenter $p=\frac{1}{m+1}$, $t_m=0$. This is the specialization $q_1=\cdots=q_m=1/m$ of the otherwise unrestricted conditional composition in Section~\ref{sec:kernel}.
\begin{definition}[One-versus-rest DCCQ map]
\begin{equation}R_m(p)\coloneqq\frac1{m+1}+i\log\frac{mp}{1-p},\qquad0<p<1.\tag{11}\end{equation}
\end{definition}
The dependence on $m$ can be seen directly by comparison with the binary map:
\[
R_1(p)=\frac12+i\log\frac{p}{1-p},\qquad
R_m(p)=R_1(p)+\left(\frac{1}{m+1}-\frac12\right)+i\log m.
\]
Thus all $R_m$ share the same logit dependence on $p$; $m$ changes the horizontal anchor and the origin of height. The shift $i\log m$ makes the equal-probability point correspond to zero height.

\begin{theorem}[Barycentrically anchored vertical-line theorem]
For every integer $m\ge1$, $R_m$ is a real-analytic bijection from $(0,1)$ onto $L_{\frac{1}{m+1}}\coloneqq\left\{\frac{1}{m+1}+it:t\in\R\right\}$, with inverse
\begin{equation}p=\frac{e^t}{e^t+m}.\tag{12}\end{equation}
The simplex barycenter maps to $s=\frac{1}{m+1}$.
\end{theorem}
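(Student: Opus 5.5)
The plan is to factor $R_m$ as a composition of elementary real-analytic bijections and read off the inverse directly. Write $R_m(p)=\frac{1}{m+1}+i\,t_m(p)$ with $t_m$ as in (10). The real part of $R_m(p)$ is the constant $\frac{1}{m+1}$, so $R_m$ maps $(0,1)$ into $L_{\frac{1}{m+1}}$. Moreover, the map $t\mapsto\frac{1}{m+1}+it$ is a real-analytic bijection $\R\to L_{\frac{1}{m+1}}$ (an affine isometry onto the line). It therefore suffices to show that $t_m:(0,1)\to\R$ is a real-analytic bijection with real-analytic inverse given by (12).

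For this step, note that $t_m(p)=\log\frac{p}{1-p}+\log m$. The factor $\log\frac{p}{1-p}$ is the standard logit, which is a real-analytic diffeomorphism $(0,1)\to\R$ since $p/(1-p)$ maps $(0,1)$ analytically and bijectively onto $(0,\infty)$ and $\log$ is analytic there. Adding the constant $\log m$ (with $m\ge1$, so $\log m$ is finite) is a translation of $\R$. To exhibit the inverse, solve $t=\log\frac{mp}{1-p}$ for $p$: this gives $e^t(1-p)=mp$, hence $p=\frac{e^t}{e^t+m}$, which is (12). I will then verify both compositions explicitly:
\begin{itemize}
\item for every $t\in\R$, this $p$ lies in $(0,1)$ because $e^t>0$ and $m>0$, and $\frac{mp}{1-p}=e^t$;
\item for every $p\in(0,1)$, substituting $t_m(p)$ back returns $p$.
\end{itemize}
The inverse is real analytic as a quotient of analytic functions with nonvanishing denominator.

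Finally, evaluate at the barycenter. The barycenter $\mathbf b_m$ corresponds in the family (9) to $p=\frac{1}{m+1}$, since then $\frac{1-p}{m}=\frac{1}{m+1}$. There $\frac{mp}{1-p}=\frac{m/(m+1)}{m/(m+1)}=1$, so $t_m=0$ and $R_m$ takes the value $s=\frac{1}{m+1}$.

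I do not expect a genuine obstacle; the only point needing care is the bookkeeping of the bijection. Surjectivity onto the whole line, rather than merely into it, follows from the explicit inverse being defined on all of $\R$. The case $m=1$ needs no separate treatment, since $\log 1=0$ and the formulas reduce to the binary map $R_1$.
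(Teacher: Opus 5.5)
Your proposal is correct and follows essentially the paper's route: reduce $R_m$ to the binary logit shifted by $\log m$ and placed on the line $\Re s=\frac{1}{m+1}$, then solve $e^t=\frac{mp}{1-p}$ to obtain the inverse (12). The only difference is that the paper gets bijectivity from the positive derivative $\frac1p+\frac1{1-p}$ and the endpoint limits $\mp\infty$, whereas you get it directly by checking both compositions with the explicit inverse; you also verify the barycenter value, which the paper's proof leaves implicit.
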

\begin{proof}
The binary logit has derivative $\frac{1}{p}+\frac{1}{1-p}>0$ and endpoint limits $-\infty$ and $+\infty$. Translating its image as above gives the stated line for every $m$. Solving $e^t=\frac{mp}{1-p}$ yields the displayed real-analytic inverse.
\end{proof}
The image line $L_{\frac{1}{m+1}}$ will be called the associated DCCQ characteristic line. The anchor $\frac{1}{m+1}$ is motivated by the symmetric point of this slice; the decision to combine it with the contrast is part of the chosen DCCQ coordinate construction.

\paragraph{Full-map restriction, one-versus-rest map, and binary grouping.}
These are different maps. For $m\ge2$, the literal restriction satisfies
\[
\Phi_m(\mathbf P_m(p))=(p+i0,0,\ldots,0),
\]
with no additional real coordinates when $m=2$: all the rest-versus-rest log-ratios vanish. In particular, on the ternary equal-pair family $\mathbf P_2(p)=\left(p,\frac{1-p}{2},\frac{1-p}{2}\right)$,
\[
\begin{aligned}
\Phi_2(\mathbf P_2(p))&=p,\\
R_2(p)&=\frac{1}{3}+i\log\frac{2p}{1-p},\\
R_1(p)&=\frac{1}{2}+i\log\frac{p}{1-p}.
\end{aligned}
\]
The first is the literal restriction of the full coordinate map, the second is the ternary one-versus-rest map, and the third follows after grouping the last two categories into one Bernoulli outcome. Grouping is reversible on this equal-pair family because its conditional split is fixed. Intrinsic dimension one alone does not identify these constructions or force a curve onto $\Re s=1/2$. No regrouping operation is implicit in $\Phi_2$.

\subsection{Reflection, conjugation, and binary case}
Introduce
\begin{equation}C(s)\coloneqq\bar s,\qquad J(s)\coloneqq1-s.\tag{13}\end{equation}
For $s=\sigma+it$, $C(s)=\sigma-it$, $J(s)=1-\sigma-it$, and $J\circ C(s)=1-\sigma+it$. The last is reflection across $\Re s=1/2$. For any real $\sigma$, write $L_\sigma\coloneqq\{\sigma+it:t\in\R\}$.
\begin{proposition}[Symmetry orbit]
For $\sigma\ne1/2$ and $t\ne0$, the orbit generated by $C,J$ is $\{\sigma+it,\sigma-it,1-\sigma+it,1-\sigma-it\}$. Thus $L_\sigma$ and $L_{1-\sigma}$ are exchanged by reflection. For the one-versus-rest family,
\begin{equation}L_{\frac{1}{m+1}}\longleftrightarrow L_{\frac{m}{m+1}}.\tag{14}\end{equation}
\end{proposition}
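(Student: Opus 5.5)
The plan is to compute the group generated by $C$ and $J$ as a group of real-affine maps of $\C$, and then read off the orbit of a generic point. First I would check three relations: $C^2=\mathrm{id}$ because conjugation is an involution, $J^2=\mathrm{id}$ because $1-(1-s)=s$, and $C$ and $J$ commute because
\[
C(J(s))=\overline{1-s}=1-\bar s=J(C(s)).
\]
It follows that the generated group is a quotient of the Klein four-group $\{\mathrm{id},C,J,J\circ C\}$. Every word in $C$ and $J$ therefore reduces to one of these four maps, so the orbit of $s=\sigma+it$ is contained in
\[
\{\sigma+it,\ \sigma-it,\ 1-\sigma-it,\ 1-\sigma+it\},
\]
where the elements are listed using the coordinate formulas already recorded after (13). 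Each of the four points is attained by the corresponding group element, so the orbit equals this set.

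Next I would confirm that the orbit has exactly four points under the hypotheses. The real parts are $\sigma$ or $1-\sigma$, and these differ because $\sigma\ne1/2$. The imaginary parts are $t$ or $-t$, and these differ because $t\ne0$. So the four points are pairwise distinct. This is the only place the hypotheses enter, and it is the main point needing care: if $\sigma=1/2$ or $t=0$, the orbit would degenerate to two points, or to one.

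For the line statement, I would use $J\circ C(\sigma+it)=(1-\sigma)+it$. This map is the reflection across $\Re s=1/2$, and it sends $L_\sigma$ bijectively onto $L_{1-\sigma}$; its inverse is itself. Since it is an involution, it exchanges the two lines. The conjugation $C$ preserves each line setwise, and $J$ also maps $L_\sigma$ onto $L_{1-\sigma}$, so the whole group permutes $\{L_\sigma,L_{1-\sigma}\}$. Finally, I would specialize to $\sigma=\frac1{m+1}$, where $1-\sigma=\frac m{m+1}$, which gives (14). For $m\ge2$ we have $\sigma\ne1/2$, so the two lines are distinct. For $m=1$ the two lines coincide with the critical line, and the exchange degenerates to the identity on lines. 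I would note this case for consistency with the binary baseline $R_1$.
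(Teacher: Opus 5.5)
Your proposal is correct and follows essentially the paper's route: the paper states the proposition without a separate proof, relying on the formulas for $C(s)$, $J(s)$ and $J\circ C(s)$ displayed right after (13). You add a check the paper leaves implicit, namely that $C$ and $J$ are commuting involutions, so the four maps $\mathrm{id},C,J,J\circ C$ exhaust the generated group; you also show that the hypotheses $\sigma\ne1/2$, $t\ne0$ make the four orbit points distinct.
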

\begin{theorem}[Conjugation-reflection coincidence]
For $s\in\C$, $C(s)=J(s)$ if and only if $\Re s=1/2$. The binary member $m=1$ is therefore the unique member of the barycentric family whose characteristic line coincides with its reflected partner; both then reduce to the critical line $\Re s=\frac12$.
\end{theorem}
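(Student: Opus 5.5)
The plan is to verify the equivalence directly in real coordinates and then apply it to the characteristic lines of the one-versus-rest family. Write $s=\sigma+it$ with $\sigma,t\in\R$. By \eqref{eq:contrasts}'s companion definitions in (13), $C(s)=\sigma-it$ and $J(s)=1-\sigma-it$. Two complex numbers are equal exactly when their real and imaginary parts agree. The imaginary parts are both $-t$, so they agree automatically. The real parts agree if and only if $\sigma=1-\sigma$, that is, $\sigma=1/2$. This gives both directions of the equivalence at once, with no case split, and it holds for all $t$, including $t=0$.

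For the second assertion, I would rephrase ``the characteristic line coincides with its reflected partner'' as $L_{\frac{1}{m+1}}=L_{\frac{m}{m+1}}$, using the Symmetry orbit proposition and (14). Two vertical lines $L_\alpha$ and $L_\beta$ coincide if and only if $\alpha=\beta$. So the condition becomes $\frac{1}{m+1}=\frac{m}{m+1}$, which for integers $m\ge1$ is equivalent to $m=1$.

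Equivalently, and tying this to the first part, the line $L_{\frac{1}{m+1}}$ is pointwise fixed by $J\circ C$. By the computation above, this happens exactly when $C=J$ on that line, which happens exactly when its real part $\frac{1}{m+1}$ equals $1/2$. For $m=1$ the barycentric anchor is $1/2$, so $L_{1/2}$ is the critical line. By the Barycentrically anchored vertical-line theorem, the image of $R_1$ is this line, so both the characteristic line and its reflection reduce to $\Re s=\tfrac12$.

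There is no real obstacle here. The one point needing care is the interpretation of ``coincides with its reflected partner''. I would state it as equality of the sets $L_{\frac1{m+1}}$ and $J\circ C\bigl(L_{\frac1{m+1}}\bigr)=L_{\frac{m}{m+1}}$. That way the uniqueness of $m=1$ follows from comparing real parts and does not depend on the choice $t\ne0$ made in the orbit proposition.
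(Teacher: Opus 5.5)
Your proposal is correct and is essentially the paper's proof, which consists of the single observation that $\sigma-it=1-\sigma-it$ is equivalent to $2\sigma=1$. Your explicit check that $\frac{1}{m+1}=\frac{m}{m+1}$ holds only for $m=1$ spells out the uniqueness step that the paper leaves to the word ``therefore''. One small slip: the definitions of $C$ and $J$ come from (13) alone, and (3) plays no role.
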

\begin{proof}The equality $\sigma-it=1-\sigma-it$ is equivalent to $2\sigma=1$.\end{proof}
\begin{figure}[ht]\centering
\begin{tikzpicture}[x=13.0cm,y=.72cm,line label/.style={anchor=north,fill=white,inner xsep=2pt,inner ysep=7pt}]
\draw[->] (-0.015,0)--(1.045,0) node[right]{$\Re s$};
\foreach \x/\lab in {0/{0},1/{1}} {
  \draw[gray!65,dashed,line width=.7pt](\x,-2.25)--(\x,2.15);
  \node[line label,yshift=-3pt] at (\x,0) {$\lab$};
}
\foreach \x/\lab in {.25/{\frac14},.333333/{\frac13},.666667/{\frac23},.75/{\frac34}} {
  \draw[gray,dashed,line width=.8pt](\x,-2.15)--(\x,1.85);
  \node[line label,yshift=-3pt] at (\x,0) {$\lab$};
}
\draw[blue!70!black,very thick](.5,-2.25)--(.5,2.15);
\node[line label,yshift=-3pt] at (.5,0) {$\frac12$};
\node[above=2pt] at (.5,2.15) {$m=1$};
\node[above=2pt] at (.333333,1.55) {$m=2$};
\node[above=2pt] at (.666667,1.55) {reflected};
\path[use as bounding box] (-.02,-2.55) rectangle (1.045,2.65);
\end{tikzpicture}
\caption{One-versus-rest vertical lines inside the open critical strip. The dashed outer lines mark the strip boundaries $\Re s=0$ and $\Re s=1$. The solid central line $\Re s=\frac12$ is the reflection axis and the critical line, corresponding to the binary case $m=1$. For $m>1$, the one-versus-rest line $\Re s=\frac{1}{m+1}$ and its reflected partner $\Re s=\frac{m}{m+1}$ are distinct. This is coordinate geometry, not a claim about zeros.}
\end{figure}

\section{The ternary simplex and the full critical strip}\label{sec:ternary}
The ternary simplex $\D_2$ has exactly two real degrees of freedom and no residual contrast coordinates. This section writes Corollary~\ref{cor:ternary} explicitly; it introduces no new map or independent diffeomorphism theorem. Relabel $p_1=p_+$ and $p_2=p_-$, so that
\[
\D_2^\circ=\{\mathbf p=(p_0,p_+,p_-):p_0,p_+,p_->0,\ p_0+p_++p_-=1\}.
\]
The labels are semantic: $p_0$ is the bounded mass/real-coordinate role, while $p_+,p_-$ form the oriented contrast. They are probabilities, not signed masses. Exchanging $p_+$ and $p_-$ complex-conjugates the coordinate.

The full coordinate map specializes to
\begin{equation}\Phi_2(\mathbf p)=p_0+i\log\frac{p_+}{p_-}.\tag{15}\label{eq:ternary}\end{equation}
At the ternary barycenter $\mathbf b_2=(1/3,1/3,1/3)$, this gives $\Phi_2(\mathbf b_2)=1/3$, so the equal-probability point lies at zero height on the $1/3$ line.
By Theorem~\ref{thm:full}, this is a real-analytic diffeomorphism $\D_2^\circ\to\Strip$. For $s=\sigma+it$, the $m=2$ case of \eqref{eq:inverse} reads
\begin{equation}p_0=\sigma,\quad p_+=(1-\sigma)\frac{e^t}{1+e^t},\quad p_-=(1-\sigma)\frac{1}{1+e^t}.\tag{16}\end{equation}
The inverse is positive, sums to one, and gives $p_+/p_-=e^t$. Both directions are real analytic. As a local check, in variables $(p_0,p_+)$ with $p_-=1-p_0-p_+$, the Jacobian determinant of $(\sigma,t)$ is $\frac{1}{p_+}+\frac{1}{p_-}>0$; Appendix~\ref{app:jacobian} displays the full matrix.

This is related to standard log-ratio ideas but is not an isometric log-ratio (ilr) transformation: its real component is the mass $p_0$, while its imaginary component is one log-ratio. Every fixed-mass segment $p_++p_-=1-\sigma$ maps to the entire vertical line $\Re s=\sigma$. The $\frac{1}{3}$ line is a convenient barycentric example; $\frac{1}{2}$ is independently distinguished by reflection. Figure~\ref{fig:ternary-strip} visualizes this ternary-simplex-to-strip correspondence.

\begin{figure}[htbp]
\centering
\resizebox{\textwidth}{!}{%
\begin{tikzpicture}[x=1.05cm,y=1.05cm, line cap=round, line join=round]
  \begin{scope}[shift={(0,0)}]
    \coordinate (V0) at (0,2.35);
    \coordinate (Vp) at (-1.68,0);
    \coordinate (Vm) at (1.68,0);
    \draw[thick] (Vp)--(Vm)--(V0)--cycle;
    \node[above] at (V0) {$p_0=1$};
    \node[below left] at (Vp) {$p_+=1$};
    \node[below right] at (Vm) {$p_-=1$};
    \node at (0,-0.50) {$\D_2^\circ$};

    \coordinate (L) at (-0.72,1.00);
    \coordinate (R) at (0.72,1.00);
    \fill[gray!18] (L)--(R)--(Vm)--(Vp)--cycle;
    \draw[very thick] (L)--(R);
    \node[above left] at (-0.70,1.05) {$p_0=\sigma$};
    \fill (0,0.63) circle (1.45pt);
    \node[left] at (-0.06,0.63) {$\mathbf b_2$};
    \node[below right] at (0.07,0.60) {$\left(\frac13,\frac13,\frac13\right)$};
  \end{scope}

  \draw[->, thick] (2.15,0.98)--(3.95,0.98) node[midway, above] {$\Phi_2$};

  \begin{scope}[shift={(4.65,0)}]
    \draw[dashed] (0,-1.30)--(0,2.00);
    \draw[dashed] (3.85,-1.30)--(3.85,2.00);
    \draw[->] (-0.45,0)--(4.55,0) node[right] {$\Re s$};
    \draw[->] (1.95,-1.60)--(1.95,2.25) node[above] {$\Im s$};
    \node[below] at (0,0) {$0$};
    \node[below] at (3.85,0) {$1$};
    \node[above] at (1.95,1.95) {$\Strip$};

    \draw[very thick] (1.30,-1.30)--(1.30,2.00);
    \node[below] at (1.30,0) {$\sigma$};
    \node[right] at (1.36,1.42) {$\Re s=\sigma$};
    \fill (1.30,0) circle (1.45pt);
    \node[below right] at (1.37,-0.08) {$\frac13+i0$};

    \draw[<->] (2.75,-0.90)--(2.75,0.90);
    \node[right] at (2.75,0) {$t$};
    \node[align=center] at (2.75,-1.10) {vary $p_+/p_-$};
  \end{scope}
\end{tikzpicture}%
}
\caption{The ternary simplex $\D_2^\circ$ and its DCCQ coordinate image in the full critical strip $\Strip$. A fixed-$p_0=\sigma$ slice in the simplex maps under $\Phi_2$ to the vertical line $\Re s=\sigma$. Motion along the slice changes the ratio $p_+/p_-$, hence the height $t$. The barycenter $(1/3,1/3,1/3)$ maps to $1/3+i0$.}
\label{fig:ternary-strip}
\end{figure}

For $m>2$, $\D_m^\circ\cong\Strip\times\R^{m-2}$ retains all degrees of freedom. Only for the ternary member does the residual real factor disappear.

\section{Integer-lattice realization of the ternary strip coordinate}\label{sec:lattice}
The rational simplex has a second exact realization as normalized lattice areas. Let $T\coloneqq\operatorname{conv}(\mathbf v_0,\mathbf v_+,\mathbf v_-)$ be a nondegenerate triangle with vertices $\mathbf v_0,\mathbf v_+,\mathbf v_-\in\mathbb Z^2$, and let $\mathbf x\in T^\circ\cap\mathbb Z^2$. Define
\[
A_T\coloneqq2\operatorname{Area}(T)=|\det(\mathbf v_+-\mathbf v_0,\mathbf v_--\mathbf v_0)|\in\N,
\]
and let $A_0,A_+,A_-$ denote the doubled subareas opposite the corresponding vertices. The symbols $A_j$ here denote \emph{doubled} triangle areas; $A_H$ below denotes an ordinary sector area.
\begin{proposition}[Integer areal coordinates]
$A_0,A_+,A_-\in\N$, $A_0+A_++A_-=A_T$, and
\begin{equation}p_0\coloneqq\frac{A_0}{A_T},\quad p_+\coloneqq\frac{A_+}{A_T},\quad p_-\coloneqq\frac{A_-}{A_T}.\tag{17}\end{equation}
Consequently,
\begin{equation}\Phi_2(\mathbf p)=\frac{A_0}{A_T}+i\log\frac{A_+}{A_-}.\tag{18}\end{equation}
\end{proposition}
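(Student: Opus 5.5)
We prove the three claims in turn: integrality, additivity, and the coordinate formula. Write the three doubled subareas as absolute determinants of integer vectors:
\[
A_0=|\det(\mathbf v_+-\mathbf x,\mathbf v_--\mathbf x)|,\quad
A_+=|\det(\mathbf v_--\mathbf x,\mathbf v_0-\mathbf x)|,\quad
A_-=|\det(\mathbf v_0-\mathbf x,\mathbf v_+-\mathbf x)|.
\]
Here $A_j$ is twice the area of the subtriangle obtained by replacing $\mathbf v_j$ with $\mathbf x$, that is, the subtriangle opposite $\mathbf v_j$.

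\textbf{Integrality.} Each determinant has integer entries, because $\mathbf x$ and the vertices all lie in $\mathbb Z^2$. Hence each $A_j\in\mathbb Z_{\ge0}$. Strict positivity comes from $\mathbf x\in T^\circ$: an interior point is not collinear with any edge of $T$, so none of the three subtriangles is degenerate. This gives $A_j\in\N$.

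\textbf{Additivity.} The natural route is through signed areas. Fix the orientation of $T$ so that
\[
D\coloneqq\det(\mathbf v_+-\mathbf v_0,\mathbf v_--\mathbf v_0)>0.
\]
Expanding by multilinearity of the determinant, after writing each $\mathbf v_j-\mathbf x$ as $(\mathbf v_j-\mathbf v_0)+(\mathbf v_0-\mathbf x)$, shows that the three signed determinants sum to $D$; the cross terms in $\mathbf v_0-\mathbf x$ cancel.

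\textbf{Signs.} Since $\mathbf x$ is interior, it lies strictly on the inner side of each edge line. Each signed subdeterminant therefore has the same sign as $D$, so absolute values can be taken without changing the sum. This gives $A_0+A_++A_-=A_T$.

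\textbf{The coordinate formula.} The $p_j$ are positive rationals summing to one, so $\mathbf p\in\D_2^\circ\cap\Q^3$. They are the barycentric coordinates of $\mathbf x$. Substituting into \eqref{eq:ternary}, the common denominator $A_T$ cancels inside the log-ratio, giving $\Phi_2(\mathbf p)=A_0/A_T+i\log(A_+/A_-)$. The same formula can also be read as (8) applied to the count vector $(A_0,A_+,A_-)$, in the spirit of Theorem~\ref{thm:count}.

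\textbf{Main obstacle.} The only step needing care is the sign argument in additivity. It must be clear that interiority forces all three signed subdeterminants to share the sign of $D$. I would justify this by observing that the sign of $\det(\mathbf v_+-\mathbf y,\mathbf v_--\mathbf y)$ is an affine function of $\mathbf y$ that vanishes on the edge line through $\mathbf v_+$ and $\mathbf v_-$. It takes the value $D$ at $\mathbf y=\mathbf v_0$, so it keeps the sign of $D$ throughout the open half-plane containing $T^\circ$.
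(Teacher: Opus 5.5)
Your proof is correct and follows the paper's route: integer determinants give integrality, area additivity over the three-piece subdivision gives $A_0+A_++A_-=A_T$, and $A_T$ cancels from the log-ratio in \eqref{eq:ternary}. The only difference is that you check the subdivision identity algebraically, by multilinear expansion of the signed determinants together with the affine sign argument (which also gives strict positivity), whereas the paper simply invokes geometric area additivity and the areal barycentric interpretation.
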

\begin{proof}
Doubled lattice-triangle areas are integer determinants. Areal barycentric coordinates give the ratios, and subdivision gives the sum. Total area cancels from the log-ratio.
\end{proof}
\begin{theorem}[Standard count-to-lattice realization]
Let $\mathbf k\coloneqq(k_0,k_+,k_-)$ have positive integer components with $n\coloneqq k_0+k_++k_-$. Define $T_n\coloneqq\operatorname{conv}\{(0,0),(n,0),(0,n)\}$ and $\mathbf x_{\mathbf k}\coloneqq(k_+,k_-)$. Then
\begin{equation}A_{T_n}=n^2,\qquad(A_0,A_+,A_-)=n(k_0,k_+,k_-),\tag{19}\end{equation}
and hence
\begin{equation}\Phi_2([\mathbf k])=\frac{k_0}n+i\log\frac{k_+}{k_-}.\tag{20}\end{equation}
The count and doubled-area vectors represent the same positive ray class.
\end{theorem}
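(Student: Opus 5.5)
The plan is to compute the four doubled areas directly as lattice determinants and then invoke the Integer areal coordinates proposition. Here $\mathbf v_0=(0,0)$, $\mathbf v_+=(n,0)$, $\mathbf v_-=(0,n)$. The full triangle gives $A_{T_n}=|\det((n,0),(0,n))|=n^2$.

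The point $\mathbf x_{\mathbf k}=(k_+,k_-)$ lies in $T_n^\circ$. Indeed $k_+,k_->0$, and $k_++k_-=n-k_0<n$ because $k_0>0$. So the proposition applies.

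The subareas are then computed one at a time, each opposite its labelled vertex.
\begin{itemize}
\item $A_0$ is the doubled area of $\operatorname{conv}(\mathbf x,\mathbf v_+,\mathbf v_-)$. It equals $|\det(\mathbf v_+-\mathbf x,\mathbf v_--\mathbf x)|$. Expanding gives $|n^2-n(k_++k_-)|=n(n-k_+-k_-)=nk_0$.
\item $A_+$ is opposite $\mathbf v_+$, i.e.\ the triangle $(\mathbf v_0,\mathbf x,\mathbf v_-)$. Its doubled area is $|\det((k_+,k_-),(0,n))|=nk_+$.
\item $A_-$ is opposite $\mathbf v_-$, i.e.\ the triangle $(\mathbf v_0,\mathbf v_+,\mathbf x)$. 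Its doubled area is $|\det((n,0),(k_+,k_-))|=nk_-$.
\end{itemize}
As a consistency check, the sum is $n(k_0+k_++k_-)=n^2=A_{T_n}$, matching the subdivision identity. This establishes (19).

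For (20), substitute into (18):
\[
\Phi_2=\frac{nk_0}{n^2}+i\log\frac{nk_+}{nk_-}=\frac{k_0}{n}+i\log\frac{k_+}{k_-}.
\]
This agrees with the count-class formula (8) for $m=2$, with $\Phi_2([\mathbf k])=\Phi_2(\mathbf k/n)$. The ray statement is then immediate: $(A_0,A_+,A_-)=n\,\mathbf k$ is a positive integer multiple of $\mathbf k$, so the two vectors are equivalent under (5).

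There is no real obstacle. The only point needing care is the vertex-labelling convention, so that "opposite $\mathbf v_+$" pairs with $k_+$ and not $k_-$. I would check that pairing explicitly against the areal-coordinate identity $\mathbf x=\sum_j (A_j/A_T)\mathbf v_j$. Here it reads $\mathbf x=(nk_+/n^2)(n,0)+(nk_-/n^2)(0,n)=(k_+,k_-)$, as required.
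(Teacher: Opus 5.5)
Your proof is correct and follows essentially the paper's argument: the paper reads the subareas off the barycentric coordinates $(k_0/n,k_+/n,k_-/n)$ of $\mathbf x_{\mathbf k}$ and multiplies by $n^2$, whereas you compute the same determinants directly (as the paper's appendix does) and use the barycentric identity only as a labelling check, before cancelling the common factor exactly as the paper does. Your explicit check that $\mathbf x_{\mathbf k}\in T_n^\circ$, which the integer areal-coordinates proposition requires, is a detail the paper leaves implicit.
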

\begin{proof}
The parent doubled area is $n^2$. The barycentric coordinates of $\mathbf x_{\mathbf k}$ relative to the displayed vertices are $(k_0/n,k_+/n,k_-/n)$. Multiplying by $n^2$ gives the subareas. In the coordinate calculation the common factors cancel:
\[
\frac{A_0}{A_{T_n}}+i\log\frac{A_+}{A_-}
=\frac{nk_0}{n^2}+i\log\frac{nk_+}{nk_-}
=\frac{k_0}{n}+i\log\frac{k_+}{k_-}.\qedhere
\]
\end{proof}
For the same ternary ray $\mathbf k=(2,3,1)$, one has $n=6$ and $(A_0,A_+,A_-)=(12,18,6)$; both count and area coordinates therefore give $\Phi_2=1/3+i\log 3$.

Scaling counts by an integer $g$ scales the standard triangle's lengths by $g$ and its areas by $g^2$, while $\Phi_2$ stays fixed. Thus this realization does not preserve absolute lattice area under the count-ray quotient. Direct count substitution computes the coordinate without an area calculation; the lattice theorem supplies its exact geometric realization.

In dimension $m$, lattice-normalized volume is $m!\operatorname{Vol}_m$. For $\operatorname{conv}(\mathbf 0,n\mathbf e_1,\ldots,n\mathbf e_m)$ and a composition $\sum_{j=0}^mk_j=n$, the normalized subvolumes are $n^{m-1}k_j$ \cite{beck}. The planar doubled-area formula is its $m=2$ instance.
\begin{figure}[ht]\centering
\begin{tikzpicture}[scale=.72]
\fill[blue!5](0,0)--(6,0)--(0,6)--cycle;
\draw[thick](0,0)node[below]{$(0,0)$}--(6,0)node[below]{$(n,0)$}--(0,6)node[above]{$(0,n)$}--cycle;
\draw(0,0)--(2.1,2.3)--(0,6);\draw(2.1,2.3)--(6,0);
\fill(2.1,2.3)circle(2pt);\node[right]at(2.1,2.3){$\mathbf x_{\mathbf k}$};
\node at(.65,2.8){$A_+$};\node at(2.7,2.7){$A_0$};\node at(2.6,.7){$A_-$};
\end{tikzpicture}
\caption{Standard lattice realization, with $\mathbf x_{\mathbf k}=(k_+,k_-)$. Doubled subareas are $n(k_0,k_+,k_-)$, so areal normalization agrees with count normalization.}
\end{figure}

\FloatBarrier
\section{Hyperbolic representation of the ternary log-ratio}\label{sec:hyperbolic}
This section introduces no new degree of freedom; it gives a hyperbolic geometric representation of the ternary log-ratio $t$.

For a general ternary probability vector, the vertical coordinate is $t\coloneqq\log\frac{p_+}{p_-}$. When the vector has a positive integer count or lattice-area realization, the same quantity is
\begin{equation}t=\log\frac{p_+}{p_-}=\log\frac{k_+}{k_-}=\log\frac{A_+}{A_-}.\tag{21}\end{equation}
Introduce
\begin{equation}h\coloneqq\frac t2.\tag{22}\end{equation}
Then
\begin{equation}\gamma(h)\coloneqq(\cosh h,\sinh h)\tag{23}\end{equation}
lies on $\mathcal H_+\coloneqq\{(X,Y):X^2-Y^2=1,\ X>0\}$ by the standard hyperbolic identities \cite{dlmf}. For integer representatives,
\begin{equation}X\coloneqq\frac{k_++k_-}{2\sqrt{k_+k_-}},\qquad Y\coloneqq\frac{k_+-k_-}{2\sqrt{k_+k_-}}.\tag{24}\end{equation}
These formulas also hold with counts replaced by doubled subareas. Positive integer pair-ray classes are identified with $\Q_{>0}$ by their ratio $r\coloneqq \frac{k_+}{k_-}$. Thus $h=\frac12\log r$ maps them bijectively to the dense subset $\frac12\log\Q_{>0}\subset\R$. Extending $r$ to $\R_{>0}$ gives the full real parameter axis and hence the whole right branch. This is a density-and-continuity extension, not an assertion that the integer set is already the continuous branch.

\paragraph{The reversible geometric object.}
The inverse ratio is $r=\frac{X+Y}{X-Y}$; the denominator is positive on $\mathcal H_+$. For the same ternary ray used above, $r=3$ and $(X,Y)=(2/\sqrt3,1/\sqrt3)$, so the inverse formula returns $3$. Exchanging $k_+$ and $k_-$ leaves $X$ fixed, sends $Y$ to $-Y$, and sends $r$ to $1/3$, making the orientation information explicit. Retaining the mass coordinate gives the precise identification
\[
\D_2^\circ\cong(0,1)\times\mathcal H_+,\qquad
\mathbf p\longmapsto(p_0,\gamma(\frac12\log(\frac{p_+}{p_-}))).
\]
Its inverse is
\[
p_0=\sigma,\qquad p_+=(1-\sigma)\frac{X+Y}{2X},\qquad
p_-=(1-\sigma)\frac{X-Y}{2X}.
\]
A fixed-mass slice corresponds to one full hyperbola branch. The whole simplex needs both factors. Retaining only $X=\cosh h$ loses the distinction between $h$ and $-h$; the pair $(X,Y)$, or an additional orientation label, is needed for signed recovery. These are real-analytic coordinate identifications, not claims of lattice or metric isometry.

\paragraph{Which area is represented?}
Let $\mathbf O\coloneqq(0,0)$, $\mathbf E\coloneqq(1,0)$, and $\mathbf P\coloneqq\gamma(h)$. Let $A_H(h)$ be the signed ordinary planar area bounded by the segment $\mathbf O\mathbf E$, the hyperbola arc from $\mathbf E$ to $\mathbf P$, and the radial segment $\mathbf P\mathbf O$. Then
\begin{equation}A_H(h)=\frac h2=\frac t4.\tag{25}\end{equation}
Indeed, along $\gamma(v)$, $X\,dY-Y\,dX=dv$, while radial segments contribute zero. The oriented planar area formula therefore gives $2A_H=\int_0^h dv=h$, the signed sector interpretation derived directly here. In particular, the same contrast has the three normalizations
\[
t=2h=4A_H.
\]
\begin{figure}[htbp]
\centering
\begin{tikzpicture}[x=3.15cm,y=3.15cm]
\def\hval{1.05}
\coordinate (O) at (0,0);
\coordinate (E) at (1,0);
\coordinate (P) at ({(exp(\hval)+exp(-\hval))/2},{(exp(\hval)-exp(-\hval))/2});
\path[fill=black!7]
  (O)--(E)
  plot[domain=0:\hval,samples=90,variable=\u]
    ({(exp(\u)+exp(-\u))/2},{(exp(\u)-exp(-\u))/2})
  --(O)--cycle;
\draw[->] (-0.08,0)--(2.28,0) node[right] {$X$};
\draw[->] (0,-0.08)--(0,1.63) node[above] {$Y$};
\draw[thick] (O)--(E);
\draw[thick,domain=0:1.42,samples=110,variable=\u]
  plot ({(exp(\u)+exp(-\u))/2},{(exp(\u)-exp(-\u))/2});
\draw[thick] (O)--(P);
\draw[dashed] (E)--(P);
\fill (O) circle (0.7pt) node[below left] {$\mathbf O=(0,0)$};
\fill (E) circle (0.7pt) node[below] {$\mathbf E=(1,0)$};
\fill (P) circle (0.7pt) node[right=2pt] {$\mathbf P=\gamma(h)$};
\node at (0.82,0.34) {$A_H(h)$};
\node[anchor=west] at (2.00,1.38) {$X^2-Y^2=1$};
\end{tikzpicture}
\caption{Signed planar sector $A_H(h)$, illustrated for $h>0$. The sector is bounded by the straight segments $\mathbf O\mathbf E$ and $\mathbf O\mathbf P$ and by the hyperbola arc from $\mathbf E$ to $\mathbf P=\gamma(h)$. The dashed chord $\mathbf E\mathbf P$ marks the straight triangle $\mathbf O\mathbf E\mathbf P$ for comparison.}
\label{fig:hyperbolic-sector}
\end{figure}
Consequently,
\begin{equation}\Phi_2(\mathbf p)=p_0+4iA_H.\tag{26}\end{equation}
Equation (26) re-expresses the existing contrast; it adds no independent degree of freedom. The lattice data enter by the area-ratio transformation
\[
A_H=\frac14\log\frac{A_+}{A_-}.
\]
The sector area is neither a doubled lattice-triangle area nor the Riemannian area measure of a hyperbolic plane. The straight triangle $\mathbf O\mathbf E\mathbf P$ has signed area $\sinh(h)/2$, whereas the curved sector has area $h/2$. The occurrence of a factor two in both settings follows from the same planar determinant/integral area formula; it does not identify their regions or make the correspondence area-preserving.

Exchanging $k_+,k_-$ sends $t,h,A_H$ to their negatives and $(X,Y)$ to $(X,-Y)$. This is the endpoint of the hyperbolic construction used in the present paper. The even coordinate $X=\cosh(t/2)$ is a natural object for further analytic study, which lies outside the present scope.

\section{Transverse symmetry defect}\label{sec:defect}
The hyperbolic parameter runs along a vertical line. A separate unsigned scalar measures the transverse mismatch between the conjugation image $C(s)$ and the reflection image $J(s)$. Here ``defect'' refers to the separation of these two symmetry images, not to a failure of either symmetry itself.
\begin{definition}[Symmetry defect]
\begin{equation}D(s)\coloneqq|(1-s)-\bar s|,\qquad s=\sigma+it.\tag{27}\end{equation}
\end{definition}
\begin{proposition}
\begin{equation}D(s)=|1-2\sigma|=2\left|\sigma-\frac12\right|.\tag{28}\end{equation}
Thus $D(s)=0$ exactly on $\Re s=1/2$.
\end{proposition}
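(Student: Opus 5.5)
The plan is a direct computation in the coordinates $s=\sigma+it$ with $\sigma,t\in\R$, using the definitions of $C$ and $J$ from (13). First I compute the difference inside the modulus:
\[
(1-s)-\bar s=(1-\sigma-it)-(\sigma-it)=1-2\sigma .
\]
The imaginary parts cancel exactly. This cancellation is the whole content of the statement: the defect does not depend on the height $t$.

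Since $1-2\sigma$ is real, its complex modulus is its absolute value. This gives $D(s)=|1-2\sigma|$, and factoring out $2$ gives $2|\sigma-\tfrac12|$, which is (28).

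For the zero set, $D(s)=0$ holds if and only if $1-2\sigma=0$, that is, if and only if $\sigma=\tfrac12$. Hence $D$ vanishes exactly on the line $\Re s=\tfrac12$. This agrees with the conjugation-reflection coincidence theorem, since $D(s)=|J(s)-C(s)|$ vanishes precisely when $C(s)=J(s)$. I could cite that theorem for the zero set instead, but the direct computation already gives it.

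I do not expect a real obstacle. The only point needing care is the sign convention: the difference is $J(s)-C(s)$, not $J(s)-s$, and that ordering is what makes the $it$ terms cancel. I would also note that for $s$ in the strip $\Strip$, $D(s)$ takes values in $[0,1)$. Evaluated on the characteristic line $L_{\frac1{m+1}}$, it equals $\frac{m-1}{m+1}$, which is positive for every $m>1$.
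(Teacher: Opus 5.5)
Your proposal is correct and is essentially the paper's own argument: the paper likewise observes that the height cancels because $(1-s)-\bar s=1-2\sigma$ is real, so $D(s)=|1-2\sigma|$, which vanishes exactly when $\sigma=\tfrac12$. Your side remarks also match what the paper states right after the proposition: $D$ takes values in $[0,1)$ on $\Strip$, and $D=\frac{m-1}{m+1}$ on $L_{\frac{1}{m+1}}$.
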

Thus a symmetry collision, $C(s)=J(s)$, occurs exactly when $D(s)=0$. Because $0<\sigma<1$ in the open strip,
\[
0\le D(s)<1.
\]
The minimum $0$ is attained precisely on the critical line, while $1$ is the unattained supremum approached as $\sigma$ tends to either strip boundary. The height cancels because $(1-s)-\bar s=1-2\sigma$. For the barycentric family, the ordinary distance of the line from the reflection axis is
\begin{equation}a_m\coloneqq\left|\frac1{m+1}-\frac12\right|=\frac{m-1}{2(m+1)},\qquad m\ge1,\tag{29}\end{equation}
so
\begin{equation}D_m\coloneqq2a_m=\frac{m-1}{m+1}.\tag{30}\end{equation}
Hence $D_1=0$, $D_2=1/3$, $D_3=1/2$, and $D_m\uparrow1$ as $m\to\infty$; the one-versus-rest characteristic lines therefore approach the strip boundary in this normalized defect scale.
In the ternary strip, $\sigma=p_0$ and $D=|1-2p_0|$. The reflection axis is the slice $p_0=1/2$. This unsigned transverse measure is independent of $t$ and does not record which side of the critical line contains the point. The quantities $D_m$ and $a_m$ encode the same displacement, with $D_m=2a_m$. In the translation formula of Section~\ref{sec:ovr}, this gives $R_m(p)=R_1(p)-a_m+i\log m$. Here $D$ denotes symmetry defect, not any doubled-area variable.

\section{Relation to standard critical-strip symmetry}\label{sec:xi}
The completed Riemann xi function satisfies
\begin{equation}\xi(s)=\xi(1-s),\tag{31}\end{equation}
together with conjugation symmetry. A non-real zero away from the axis therefore has reflected and conjugate partners \cite{edwards,titchmarsh}. The Riemann Hypothesis asserts that all nontrivial zeros lie on $\Re s=1/2$.

The construction here is a coordinate model. Writing $R_2(p)=1/3+it$ gives a coordinate identity; asserting that $\rho=1/3+it$ is a zero would be a different statement. Neither the lattice realization nor the hyperbolic representation identifies a DCCQ object with $\xi$.

Reflection symmetry alone does not force a point onto the axis: any $L_\sigma$ can be paired with $L_{1-\sigma}$. The special property of $L_{1/2}$ is its coincidence with its reflected partner and, pointwise, $C(s)=J(s)$. A mechanism selecting that axis as a zero-bearing set needs analytic information beyond the coordinate symmetry.

\section{Positioning, scope, and limitations}\label{sec:scope}
\paragraph{Established ingredients.}
Multinomial kernels and probability simplices \cite{cover}, closure and log-ratio coordinates \cite{aitchison,egozcue}, integer lattice compositions and ray projection \cite{lovell}, barycentric areas and normalized lattice volumes \cite{beck}, hyperbolic identities \cite{dlmf} and the elementary planar sector-area formula, and zeta/xi symmetries \cite{edwards,titchmarsh} are established. The underlying real tuple $(p_0,u_1,\ldots,u_{m-1})$ is a standard information-geometric mixed coordinate system \cite{amari,sei}. No additional Fisher-metric, orthogonality, curvature, or isometry claim is inferred here from its subsequent complex coordinate construction unless explicitly stated.

\paragraph{Proposed DCCQ formulation.}
The proposed contribution is their organization into
\[
\begin{aligned}
\text{integer count rays}&\longrightarrow\text{multinomial simplex}\\
&\longrightarrow\text{complex coordinate construction}\\
&\longrightarrow\text{vertical lines and reflection geometry}.
\end{aligned}
\]
The specific maps are $R_m(p)=\frac{1}{m+1}+i\log\frac{mp}{1-p}$ and $\Phi_m:\D_m^\circ\to\Strip\times\R^{m-2}$, with the ternary formula $\Phi_2(\mathbf p)=p_0+i\log(\frac{p_+}{p_-})$. The standard lattice triangle gives $\Phi_2([\mathbf k])=k_0/n+i\log(\frac{k_+}{k_-})$ on rational compositions. The general integer formula retains every rational composition coordinate; rational density and continuity connect this discrete set with the full real model.

The cited literature supplies the constituent ingredients. The specific complex-strip coordinate construction and its reflection/conjugation interpretation are presented here as the DCCQ formulation; no exhaustive priority claim is made.

\paragraph{Claims not made.}
The paper does not assert zeros on $\Re s=\frac{1}{m+1}$ for $m>1$, additional critical lines in standard number theory, novelty of closure or sector area, equality of a hyperbolic coordinate with $\xi$, or a proof of RH. A countable integer-ray set is not identified with the entire continuous simplex or hyperbola. Neither lattice area nor an unspecified metric is preserved by the coordinate correspondence.

\section{Discussion and further directions}\label{sec:discussion}
The barycentric family supplies $a_1=0$, $a_2=1/6$, $a_3=1/4$, $a_4=3/10$, and $a_m\uparrow1/2$. The full ternary model supplies every $a\coloneqq|p_0-1/2|\in[0,1/2)$. Thus there is both an alphabet-indexed sequence of unsigned offsets and a continuous range of such offsets in the strip.

For $m>2$, $\D_m^\circ\cong\Strip\times\R^{m-2}$ resolves the dimensional bookkeeping. A later application may select a preferred strip coordinate or a permutation-equivariant atlas. The group $S_{m+1}$ acts on the choices of distinguished category, reference, and contrast, while the barycenter stays fixed.

The normalized-volume realization $\mathbf k\mapsto n^{m-1}\mathbf k$ gives the same ray. It realizes exactly the chart image of rational interior compositions, a dense subset of $\Strip\times\R^{m-2}$; allowing real compositions gives the full target. This distinction separates integer realizability from continuous parameterization.

Finally, the framework supplies a reusable interface for arithmetic observables and analytic transforms. Extensions must specify what scale information they retain, which observable they transform, and what mechanism selects a spectral set. None of these choices is fixed by $\Phi_m$ itself. The coordinate theory developed here is self-contained and requires no additional spectral construction.

\section{Conclusion}
Alphabet expansion changes the binary interval into a multinomial simplex. Integer compositions modulo common scaling reconstruct the rational simplex, whose Euclidean closure is the closed simplex. A primitive representative records the rational ray, and its gcd multiplier restores the original integer scale.

The dimensional hierarchy is explicit: $m=1$ gives one critical line under the binary chart; $m=2$ gives the full strip; and $m>2$ gives $\Strip\times\R^{m-2}$. In $\Phi_m$, one bounded mass and one oriented contrast form a complex coordinate, while all remaining contrasts remain real coordinates. The one-versus-rest family is a Bernoulli submodel with a fixed conditional split. Its map $R_m$, the literal restriction of the full coordinate map, and the grouped binary map are distinct operations. Only the binary characteristic line coincides with its reflected partner.

In the ternary case, semantic labels give $\Phi_2(\mathbf p)=p_0+i\log\frac{p_+}{p_-}$. Standard lattice subareas give the same rational point and the same log-ratio. The hyperbolic representation retains the mass and oriented branch point, $\D_2^\circ\cong(0,1)\times\mathcal H_+$, with $t=4A_H$ for ordinary signed planar sector area. It uses ratios of lattice areas rather than preserving their absolute values. These exact interfaces strengthen the discrete-to-multinomial-to-complex coordinate framework while keeping analytic zero selection as a separate problem.

\section*{Declaration of generative AI use}
The author conceived the central ideas and framework, directed the development of the work, and made all final scientific and editorial decisions. ChatGPT (OpenAI) was used as an AI research assistant for mathematical exploration and formulation, literature discovery, technical review, consistency checking, testing, revision, and language editing. The author critically reviewed and verified the mathematics, citations, and final text and takes full responsibility for the publication.

\appendix
\section{Auxiliary calculations}
\subsection{Jacobian of the ternary strip map}\label{app:jacobian}
With $p_-=1-p_0-p_+$,
\[
\frac{\partial(\sigma,t)}{\partial(p_0,p_+)}=
\begin{pmatrix}1&0\\[4pt]\frac{1}{p_-}&\frac{1}{p_+}+\frac{1}{p_-}\end{pmatrix},
\qquad \det=\frac{1}{p_+}+\frac{1}{p_-}>0.
\]
\subsection{Lattice subareas in the standard triangle}
For $T_n\coloneqq\operatorname{conv}\{(0,0),(n,0),(0,n)\}$ and $\mathbf x\coloneqq(k_+,k_-)$, the doubled parent area is $n^2$. The subtriangle opposite $(n,0)$ has doubled area $nk_+$, that opposite $(0,n)$ has $nk_-$, and the remaining one has $n(n-k_+-k_-)=nk_0$.

\section{Notation reference}\label{app:notation}
The table distinguishes the dimension index $m$ from the count/length scale $n$, and the full map $\Phi_m$ from the one-dimensional map $R_m$. It collects notation; no additional assumptions are imposed here.
\begingroup
\small
\renewcommand{\arraystretch}{1.17}
\setlength{\LTpre}{4pt}
\setlength{\LTpost}{4pt}
\begin{longtable}{@{}>{\raggedright\arraybackslash}p{.23\linewidth}>{\raggedright\arraybackslash}p{.60\linewidth}>{\raggedright\arraybackslash}p{.11\linewidth}@{}}
\toprule
Symbol & Meaning & Section \\
\midrule
\endfirsthead
\toprule
Symbol & Meaning & Section \\
\midrule
\endhead
\midrule
\multicolumn{3}{r@{}}{Continued on the next page} \\
\endfoot
\bottomrule
\endlastfoot
$m$ & Probability dimension; $m+1$ categories are labeled $0,\ldots,m$. & \ref{sec:construction} \\
$\mathbf p=(p_0,\ldots,p_m)$ & Probability vector; $p_j\ge0$ and $\sum_jp_j=1$. Boldface denotes the vector, not its scalar components. & \ref{sec:kernel} \\
$\mathbf k=(k_0,\ldots,k_m)$ & Integer count vector; nonzero when normalized. & \ref{sec:kernel} \\
$n$ & Total count $\sum_jk_j$; also the side-length scale of the standard lattice triangle $T_n$. & \ref{sec:kernel}, \ref{sec:lattice} \\
$B(p;n,k)$ & Ordered binary-word kernel $p^k(1-p)^{n-k}$. & \ref{sec:intro} \\
$M(\mathbf p;\mathbf k)$ & Ordered multinomial word kernel $\prod_jp_j^{k_j}$, without the multiplicity coefficient. & \ref{sec:kernel} \\
$q_j$ & Conditional probability $p_j/(1-p_0)$ for $j=1,\ldots,m$; these sum to one. & \ref{sec:kernel} \\
$u_j$ & Conditional log-ratio $\log(q_j/q_m)=\log(p_j/p_m)$, $j=1,\ldots,m-1$. & \ref{sec:coordinate} \\
$Z$ & Derived normalizer $1+\sum_{j=1}^{m-1}e^{u_j}$; not a free parameter. & \ref{sec:coordinate} \\
$\D_m,\ \D_m^\circ$ & Closed probability simplex in $\R^{m+1}$ and its relative interior (all components positive). & \ref{sec:simplex} \\
$\mathcal{C}_m,\ \sim,\ [\mathbf k]$ & Nonzero integer count vectors, equivalence by equal normalized composition, and a count-ray class. & \ref{sec:simplex} \\
$\mathbf r$ & A rational simplex point in the reconstruction proof. & \ref{sec:simplex} \\
$\mathbf k_{\rm prim},\ g$ & Primitive count vector and its gcd multiplier, with $\mathbf k=g\mathbf k_{\rm prim}$. & \ref{sec:simplex} \\
$\Phi_m$ & Full coordinate map to $\Strip\times\R^{m-2}$ for $m\ge2$. On counts, $\Phi_m([\mathbf k])$ means $\Phi_m(\mathbf k/n)$. & \ref{sec:coordinate}, \ref{sec:chart} \\
$\Strip$ & Open critical strip $\{s\in\C:0<\Re s<1\}$, viewed as a real two-dimensional coordinate domain. & \ref{sec:intro} \\
$s=\sigma+it$ & Complex strip coordinate. In $\Phi_m$, $\sigma=p_0$ and $t=u_1$. & \ref{sec:chart} \\
$\mathbf b_m$ & Equal-probability barycenter $(1/(m+1),\ldots,1/(m+1))$. & \ref{sec:ovr} \\
$\mathbf P_m(p)$ & One-versus-rest embedding $\left(p,(1-p)/m,\ldots,(1-p)/m\right)$. & \ref{sec:ovr} \\
$t_m(p),\ R_m(p)$ & Shifted logit $t_m=\log(mp/(1-p))$ and one-versus-rest map $R_m=1/(m+1)+it_m$. & \ref{sec:ovr} \\
$L_\sigma$ & Vertical line $\{\sigma+it:t\in\R\}$. & \ref{sec:ovr} \\
$C,\ J$ & Involutions $C(s)=\bar s$ and $J(s)=1-s$; $J\circ C$ reflects across $\Re s=1/2$. & \ref{sec:ovr} \\
$p_+,\ p_-$ & Ternary labels for $p_1,p_2$; positive probabilities forming an oriented log-ratio, not signed masses. & \ref{sec:ternary} \\
$T,\ T_n$ & A lattice triangle and the standard one $\operatorname{conv}\{(0,0),(n,0),(0,n)\}$. These subscripts specify scale, not dimension. & \ref{sec:lattice} \\
$\mathbf v_j,\mathbf e_j,\mathbf x_{\mathbf k}$ & Lattice vertices, standard basis vectors, and the point $\mathbf x_{\mathbf k}=(k_+,k_-)$. & \ref{sec:lattice} \\
$A_T,\ A_0,A_+,A_-$ & Doubled parent and opposite subtriangle areas. They satisfy $A_0+A_++A_-=A_T$. & \ref{sec:lattice} \\
$r,\ h$ & Positive contrast ratio and hyperbolic parameter $h=\frac12\log r=t/2$. & \ref{sec:hyperbolic} \\
$\gamma(h),\ (X,Y)$ & Hyperbola parameterization $(\cosh h,\sinh h)$ and its two scalar coordinates. & \ref{sec:hyperbolic} \\
$\mathcal H_+$ & Right branch $X^2-Y^2=1$, $X>0$; both $X$ and $Y$ are retained for orientation. & \ref{sec:hyperbolic} \\
$\mathbf O,\mathbf E,\mathbf P$ & Sector points $(0,0)$, $(1,0)$, and $\gamma(h)$. & \ref{sec:hyperbolic} \\
$A_H$ & Signed ordinary planar sector area; $A_H=h/2=t/4$, not a doubled lattice area or a hyperbolic-plane area measure. & \ref{sec:hyperbolic} \\
$D(s),\ D_m,\ a_m,\ a$ & Unsigned symmetry defect $D(s)=|1-2\Re s|$; $D_m=2a_m$ on the selected lines, and $a=|p_0-1/2|$ for a general ternary point. & \ref{sec:defect} \\
$\zeta,\ \xi,\ \rho$ & Riemann zeta function, completed xi function, and a zero when discussing standard zero symmetry. & \ref{sec:xi} \\
$S_{m+1}$ & Permutation group on the $m+1$ categories; distinct from the strip $\Strip$. & \ref{sec:discussion} \\
\end{longtable}
\endgroup

\end{document}